\documentclass{article}

\PassOptionsToPackage{numbers,compress}{natbib}
\usepackage[dblblindworkshop, final]{neurips_2026}
\workshoptitle{Long-Context Foundation Models}

\usepackage[utf8]{inputenc}
\usepackage[T1]{fontenc}
\usepackage{hyperref}
\usepackage{url}
\usepackage{booktabs}
\usepackage{amsmath}
\usepackage{amssymb}
\usepackage{amsfonts}
\usepackage{amsthm}
\usepackage{nicefrac}
\usepackage{microtype}
\usepackage{xcolor}

\newtheorem{observation}{Observation}
\DeclareMathOperator*{\argmax}{arg\,max}

\title{Retrieval as Traversal:\\
Value-Indexed Memory for Stateful Long-Context Agents}

\author{
  Daniel Strauß \\
  Technical University of Munich
  \And
  Zhenshan Bing \\
  Nanjing University
  \And
  Alois Knoll \\
  Technical University of Munich
}
\begin{document}

\maketitle

\begin{abstract}
Long-context models repeatedly solve closely related retrieval problems.
Attention or global retrieval can retain large stores, but each forward pass
again searches for what matters. Recurrent compression avoids this search only
by deciding earlier what information to preserve. Neither mechanism directly
exploits that useful memories at neighbouring retrieval steps may substantially
overlap. We propose an alternative in which retrieval is a persistent local
traversal through an address space. We separate a memory's content from
its address. Utility is defined by the information revealed by retrieving
the content, while geometry determines only how cheaply that information can be
reached. This leads to a cost-regularised indexing objective that preserves
information value inside local neighbourhoods while charging for every memory
made locally accessible. Its marginal-value form gives both attractive and
exclusion pressures. Thus, complementary memories should be co-retrievable when the
additional value exceeds retrieval cost, whereas redundant memories should not
occupy the same retrieval set solely because they are individually useful.
As a proof of principle, we test only the simplest fixed-geometry instance on
MiniGrid Memory.
A persistent local traversal head reaches $0.95$ mean return after $0.59$M
training steps with a five-token context, compared with $0.94$M for Gaussian
absolute addressing and more than $2$M for categorical absolute addressing.
The experiment does not test learned geometry or repeated retrieval. Instead, it
motivates the hypothesis that stateful retrieval may be easier to learn than stateless. To test the full proposed method further experimentation is necessary.

\end{abstract}

\section{Introduction}

Long-context models face a basic trade-off. Keeping a large history available
preserves information, but attention or retrieval must repeatedly determine
which parts of that history matter. Compressing the past into a bounded
recurrent state avoids the scan, but the compression is performed before all
future information needs are known. Sparse attention, approximate
nearest-neighbour search, and external retrieval reduce the cost of a query,
but typically still treat successive queries as fresh matching problems
\citep{kitaev2020,malkov2018,roy2021,lewis2020}.

This misses repetition, a third source of cost. The memories useful at
two nearby inference steps need not be independent sets. In order to avoid this cost information about value of the to be retrieved memories has to be known before retrieving them. We investigate this idea as retrieval by traversal. Memories and the current context and retrieval heads are placed in an address space. 
The positions of the current context and the retrieval heads aim to hold the information about the current expected value of the to be retrieved memories, as all memories in $\epsilon$ distance will be retrieved.

Our proposal differs from semantic retrieval in one important respect:
addresses should be shaped by the decision value of retrieving memory
contents, rather than by similarity between the contents themselves. This
distinction matters because two semantically similar memories may be redundant,
whereas two dissimilar memories may only be useful together.

This  paper makes two deliberately limited contributions. First, we give
a decision-theoretic objective for learning such a geometry and derive how
redundancy and complementarity affect local co-retrieval. Second, we report a
small proof-of-principle experiment in which the geometry is fixed and only the
persistent retrieval mechanism is learned. We do not claim that the
learned value-index proposed here has yet been experimentally validated.

\paragraph{Related work.}
External differentiable memories have long separated computation from stored
state.   
Neural Turing Machines combine content and location addressing \citep{graves2014}. Differentiable Neural Computers add dynamic allocation and temporal links \citep{graves2016}.
Lie-access
Neural Turing Machines are particularly close in mechanism. Here, memories occupy a
continuous address manifold and read heads move by relative transformations
\citep{yang2016}. Neural Map instead uses physical position as an external
memory index \citep{parisotto2017}. Modern sparse-attention and retrieval
systems use hashing, learned routing, nearest-neighbor search, or retrieval
indices to avoid attending to every stored item
\citep{kitaev2020,roy2021,wu2022,lewis2020}. We ask instead
whether the address space itself can be organized by the expected utility of
future retrieval and then traversed persistently across related queries.

\section{Value-indexed memory}

\subsection{Content, context, and address}

The content of a memory must be distinguished from the location at which it is
stored. Let
\[
    \Omega_t = \{1,\ldots,n_t\}
\]
index the external memories available at time $t$. Memory $j$ contains a
random variable
\[
    M_j \in \mathcal M .
\]
Before retrieval, $M_j$ is unobserved from the model's current information
state and retrieving the memory reveals a realized content $m_j$.

Let $C_j^{\mathrm w}$ denote the context in which memory $j$ was written. Its
address is
\begin{equation}
    z_j = \Phi(C_j^{\mathrm w}) \in \mathbb R^R .
    \label{eq:address}
\end{equation}
Conditional on a realized memory store, $z_j$ is a fixed location. Retrieving
$j$ inserts $m_j$, not $C_j^{\mathrm w}$, into the model's working
context.

The current context $C_t$, consisting of recent observations together with recently
retrieved memory, contains the sequence of information currently
available to the predictor. The position of the retrieval head is maintained
separately and is not part of $C_t$.

For a realized current context $c$ and a set $S \subseteq \Omega_t$, let
$M_S$ denote the corresponding memory contents in a fixed canonical order.
We define their pre-query information value by
\begin{equation}
    V(S;c)
    =
    \mathbb E\!\left[
        U(c \oplus M_S)-U(c)
        \,\middle|\, C_t=c
    \right],
    \label{eq:value}
\end{equation}
where $U$ is task utility, such as expected return or negative predictive
loss, and $\oplus$ denotes incorporation into the working context. $V$ may also be learned from the next token predictor, through counterfactual training.

Importantly, $V$ is defined from memory contents and no address appears in
(\ref{eq:value}). Geometry affects which contents are cheaply retrievable, but not
what those contents are worth.

For the theoretical discussion we assume free disposal, which is that additional information can not reduce the expected utility. Hence
\begin{equation}
    S\subseteq T
    \quad\Longrightarrow\quad
    V(S;c)\leq V(T;c).
    \label{eq:monotone}
\end{equation}

\subsection{A cost-regularised local index}

Let
\[
    B_\varepsilon(x)
    =
    \{j\in\Omega_t:\|z_j-x\|\leq\varepsilon\}
\]
denote the memories locally accessible from address $x$. A context $c$ is
mapped to a query position $\Phi(c)$, giving
\begin{equation}
    N_c(\Phi)
    =
    B_\varepsilon(\Phi(c)).
    \label{eq:neighbourhood}
\end{equation}

A naive objective would minimise only the information missing from this
neighbourhood,
\[
    V(\Omega_t;c)-V(N_c(\Phi);c).
\]
Under (\ref{eq:monotone}), however, this objective has a trivial solution:
mapping every context and memory to the same point puts all of $\Omega_t$
inside every ball and gives zero information loss.

We therefore charge a retrieval cost $\lambda>0$ per accessible memory:
\begin{equation}
    \boxed{
    \mathcal L_\lambda(\Phi)
    =
    \mathbb E\!\left[
        V(\Omega_t;C_t)
        -
        V(N_{C_t}(\Phi);C_t)
        +
        \lambda |N_{C_t}(\Phi)|
    \right].
    }
    \label{eq:indexloss}
\end{equation}
The outer expectation ranges over contexts and realised memory stores.
The constant embedding now pays $\lambda|\Omega_t|$ at every query and is no
longer universally optimal.

For a fixed context, minimising (\ref{eq:indexloss}) is equivalent to selecting
a locally accessible set with large
\begin{equation}
    J_\lambda(S;c)
    =
    V(S;c)-\lambda|S|.
    \label{eq:penalizedvalue}
\end{equation}
Thus $\lambda$ replaces an arbitrary top-$k$ budget by a value threshold:
another memory is worth making accessible precisely when the information it
adds justifies its retrieval cost.

\subsection{Complementarity and redundancy induce opposite pressures}

Define the marginal value of memory $j$ after already having access to $S$ as
\begin{equation}
    \Delta_j(S;c)
    =
    V(S\cup\{j\};c)-V(S;c).
    \label{eq:marginal}
\end{equation}
For two memories $i,j\notin S$, define their conditional interaction
\begin{align}
    s_{ij}(S;c)
    &=
    V(S\cup\{i,j\};c)
    -V(S\cup\{i\};c) \nonumber\\
    &\quad
    -V(S\cup\{j\};c)
    +V(S;c)
    \label{eq:interaction}\\
    &=
    \Delta_j(S\cup\{i\};c)-\Delta_j(S;c).
    \nonumber
\end{align}
Positive $s_{ij}$ is complementarity, meaning that observing $i$ increases the marginal
value of $j$. Negative $s_{ij}$ is redundancy.

\begin{observation}[Cost-adjusted co-retrieval]
\label{prop:cost}
For any context $c$, background set $S$, and $i,j\notin S$,
\begin{equation}
\begin{split}
    &J_\lambda(S\cup\{i,j\};c)
    -
    J_\lambda(S\cup\{i\};c)\\
    &\hspace{2cm}
    =
    \Delta_j(S\cup\{i\};c)-\lambda.
\end{split}
\label{eq:costmarginal}
\end{equation}
Hence joint retrieval is preferred to retrieving $i$ alone exactly when
$\Delta_j(S\cup\{i\};c)>\lambda$.
\end{observation}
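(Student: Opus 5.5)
The identity (\ref{eq:costmarginal}) follows by expanding both terms of its left-hand side with the definition of $J_\lambda$ in (\ref{eq:penalizedvalue}). We then recognise the value difference as the marginal value (\ref{eq:marginal}) evaluated at background set $S\cup\{i\}$. No property of $V$ beyond its definition is needed for the identity. In particular, neither free disposal (\ref{eq:monotone}) nor any submodularity assumption is used; the cost term is additive, so it contributes exactly one copy of $\lambda$.

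Concretely, I would first write
\[
J_\lambda(S\cup\{i,j\};c)-J_\lambda(S\cup\{i\};c)
=\bigl[V(S\cup\{i,j\};c)-V(S\cup\{i\};c)\bigr]-\lambda\bigl(|S\cup\{i,j\}|-|S\cup\{i\}|\bigr).
\]
Next I would compute the cardinality difference. Since $i,j\notin S$, the set $S\cup\{i,j\}$ has exactly one more element than $S\cup\{i\}$, so the difference is $1$. The one point needing care is that the statement tacitly requires $i\neq j$. If $i=j$, both sets coincide and the left side is $0$, whereas $\Delta_j(S\cup\{i\};c)-\lambda=-\lambda$. I would therefore state explicitly that $i$ and $j$ are distinct, which is implicit in calling them ``two memories'' in (\ref{eq:interaction}). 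Finally, with $S'=S\cup\{i\}$, the bracketed term is $V(S'\cup\{j\};c)-V(S';c)=\Delta_j(S';c)$ by (\ref{eq:marginal}). This gives (\ref{eq:costmarginal}).

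The ``hence'' clause is then immediate. By (\ref{eq:penalizedvalue}), minimising (\ref{eq:indexloss}) for a fixed context means preferring the accessible set with larger $J_\lambda$. So $S\cup\{i,j\}$ is strictly preferred to $S\cup\{i\}$ exactly when the difference in (\ref{eq:costmarginal}) is positive, i.e.\ when $\Delta_j(S\cup\{i\};c)>\lambda$. Equality gives indifference.

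As a remark linking to the surrounding discussion, one can rewrite $\Delta_j(S\cup\{i\};c)=\Delta_j(S;c)+s_{ij}(S;c)$ using the second line of (\ref{eq:interaction}). This makes the attractive pressure ($s_{ij}>0$) and exclusion pressure ($s_{ij}<0$) visible. I expect no real obstacle: the only substantive step is checking the cardinality count, including the distinctness caveat.
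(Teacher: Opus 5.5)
Your proposal is correct and follows the paper's own argument: substitute the definition of $J_\lambda$ from (\ref{eq:penalizedvalue}) and cancel common terms. Your explicit cardinality count, and your note that the identity needs $i\neq j$ (otherwise the left side is $0$ while the right side is $-\lambda$), make precise a condition the paper leaves implicit.
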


\begin{proof}
Substitute (\ref{eq:penalizedvalue}) and cancel common terms.
\end{proof}

The observation gives the geometry both an attractive and an exclusion
criterion. If $i$ makes $j$ sufficiently valuable that
\[
    \Delta_j(S\cup\{i\};c)>\lambda,
\]
then a local neighbourhood benefits from making the pair jointly accessible.
Positive interaction $s_{ij}$ increases the contexts in which this occurs.

Conversely, suppose $j$ is redundant once $i$ is known. In the exact case,
\[
    V(S\cup\{i,j\};c)
    =
    V(S\cup\{i\};c),
\]
so $\Delta_j(S\cup\{i\};c)=0$ and retrieving both loses exactly $\lambda$
relative to retrieving only $i$. The index therefore has no incentive to place
both in every query neighborhood based on their high singleton values alone. Due to this exclusion pressure, a learned geometry should make
redundant alternatives separately reachable when possible. In particular,
$\|z_i-z_j\|>2\varepsilon$ is sufficient to ensure that no single
$\varepsilon$-ball contains both, although the objective does not require this
specific geometric solution.

This argument concerns decision-theoretic redundancy, instead of raw statistical
correlation between $M_i$ and $M_j$. Covariance of hidden representations is
representation-dependent and need not indicate whether two memories provide
the same information for the downstream decision.

\paragraph{Substitutes compete for the same coalitions.}
The interaction structure also formalises a stronger notion of redundancy.
Call memories $i$ and $j$ value-equivalent substitutes if, for every
context $c$ and every background set $S$ not containing them,
\begin{equation}
    V(S\cup\{i\};c)
    =
    V(S\cup\{j\};c)
    =
    V(S\cup\{i,j\};c).
    \label{eq:substitute}
\end{equation}
Then
\[
    \Delta_i(S;c)=\Delta_j(S;c)
\]
for every $S,c$. For another coalition $A$, define the change in $i$'s
marginal value caused by $A$,
\begin{equation}
    \eta_i(A\mid S,c)
    =
    \Delta_i(S\cup A;c)-\Delta_i(S;c).
    \label{eq:profile}
\end{equation}
Equation (\ref{eq:substitute}) immediately implies
\begin{equation}
    \boxed{
    \eta_i(A\mid S,c)=\eta_j(A\mid S,c)
    }
    \qquad
    \text{for all } A,S,c.
    \label{eq:sameprofile}
\end{equation}
Thus exact substitutes have the same individual value and additionally share same interaction profile with every other coalition. 
In this sense, redundant memories compete for the same synergistic sets.

This suggests that the learned address geometry should be driven by
value and interaction profiles, rather than by similarity of the raw
memory contents.

\subsection{Persistent retrieval as traversal}

Once a useful geometry exists, retrieval need not restart globally at every
step. Let $i_t\in\mathbb R^R$ be a persistent retrieval-head position and
define
\begin{equation}
    M_t
    =
    B_\varepsilon(\Phi(c_t))
    \cup
    B_\varepsilon(i_t),
    \qquad
    W_t
    =
    \bigcup_{\tau=t-H+1}^{t} M_\tau .
    \label{eq:window}
\end{equation}
The working window therefore retains recently traversed memories, while the
head continues to move locally.
The movement rule is left modular. It may be learned from
RL as in our experiment, moving to the memory of highest value in the last forwards pass, or by extending the existing key, value estimators. 
It is future work to find the movement rule which is most canonical given the construction of $\Phi$.

The regime in which persistence can help can be described without choosing a
fixed $k$. Define the cost-optimal retrieval set
\begin{equation}
    S_t^\star
    \in
    \argmax_{S\subseteq\Omega_t}
    \left[
        V(S;c_t)-\lambda|S|
    \right]
\end{equation}
and its one-step addition drift
\begin{equation}
    D_t
    =
    |S_t^\star\setminus S_{t-1}^\star|.
    \label{eq:drift}
\end{equation}
When $D_t$ is small relative to $|S_t^\star|$, most useful information persists
between consecutive retrievals. A persistent mechanism can in principle reuse
the previous retrieval state instead of rediscovering the full set.

\section{A highly crude fixed-$\Phi$ proof of principle}\label{s:method}

Learning $\Phi$ is the central unresolved part of the proposal. We therefore
test a deliberately weaker question. Given that the correct address space is supplied
by the environment, is then relative and persistent retrieval easier to learn than
selecting absolute memory locations?

\paragraph{A deliberately crude approximation to $\Phi$.}
The theoretical model learns $z_j=\Phi(C_j^{\mathrm w})$ from value relations.
The present experiment does not. Instead, it replaces $\Phi$ by a hand-designed
map derived from the spatiotemporal state at which the memory was created,
schematically
\begin{equation}
    \Phi_0(C_j^{\mathrm w})=(x_j,y_j,t_j).
    \label{eq:phi0}
\end{equation}
Thus the experiment assumes that physical proximity and temporal proximity are
useful indexing features.

\paragraph{Task and architecture.}
We use MiniGrid Memory \citep{minigrid2023}. The agent observes a cue, traverses
a corridor, and later chooses the corresponding object at a T-junction.
The policy uses a Transformer-XL backbone \citep{dai2019} and PPO
\citep{schulman2017}. At each step the action is
\[
    a_t=(a_t^{\mathrm{ext}},a_t^{\mathrm{mem}}),
\]
where the first component acts in the environment and the second moves the
retrieval head.

The memory address space is fixed to the environment's spatial and temporal
coordinates. In this early stage experiment no value-indexed embedding is learned. Retrieved observations and
current observations enter a shared FIFO context window. Models are first
pretrained without a retrieval head on the fixed-size task with context length
$64$, then fine-tuned on the randomised variant with context length $5$. In the
randomised task, the relevant cue lies between $7$ and $17$ environment steps
before the decision.

We compare relative head movement with two heads that select an absolute memory
position, parameterised respectively by a Gaussian and by a categorical
distribution. Six seeds are used for each retrieval-head condition and three
for the reduced-context no-head baseline.

\paragraph{Results.}
The primary learning-time statistic is the training step at which the
mean learning curve across seeds first exceeds $0.95$ return. We did not choose
the mean of per-seed threshold-crossing times, since some individual runs never
cross the threshold within the training budget.

\begin{table}[ht]
\centering
\caption{Comparison of retrieval heads on MiniGrid-Memory-S17Random (6 seeds each). Models were pretrained on MiniGrid-Memory-S9 without a retrieval head using a context size of 64. "Steps to 0.95" is the first training step at which the mean (min--max) return reaches 0.95. "> 2.0" indicates that the value was not determined after 2,048,000 steps. The entry \textit{No retrieval head (baseline)} was tested with only 3 seeds. }
\label{tab:retrieval_heads}
\begin{tabular}{lccc}
\toprule
Method & Steps to 0.95 (M) & Avg. Excess Steps & Context \\
\midrule
No retrieval head (as pretrained) & 0.14 (0.12--0.16) & 5.08 (4.80 -- 6.23) & 64 \\
\midrule
No retrieval head (baseline) & 1.38 (1.29 -- 1.43) & 15.0 (9.16 -- 20.73) & 5 \\
\textbf{Memory Navigation} & 0.59 (0.52 -- 0.65) & 6.18 (5.95 -- 6.33) & 5 \\
Absolute Position -- Gaussian & 0.94 (0.82 -- >2.0) & 6.32 (6.09 -- 6.58) & 5 \\
Absolute Position -- Categorical & >2.0 (0.63 -- >2.0) & 7.42 (5.81 -- 9.40) & 5 \\
\bottomrule
\end{tabular}
\end{table}

Relative movement reaches the $0.95$ threshold after $0.59$M steps, compared
with $0.94$M for Gaussian absolute addressing and more than $2.0$M for
categorical absolute addressing. It also produces trajectories close to the
full-context reference despite receiving only five context tokens.

This result supports only the preliminary claim, that incremental relative addressing is
easier to learn than the tested absolute-addressing alternatives in this
single-retrieval task. It does not test the learned indexing objective
(\ref{eq:indexloss}), interaction-driven geometry, or the low-drift hypothesis.

\paragraph{A revealing negative control.}
The reduced-context baseline is stronger than expected. With a short window,
the agent can encode the cue indirectly into its own behaviour. With temporal
encoding available, it waits for a cue-dependent number of steps and later
reads the resulting clock state. When temporal and spatial encodings are
removed and the window is shortened further, it can instead repeatedly turn in
the corridor and encode the cue in its recent trajectory. These are legitimate
solutions. They prevent us from claiming a capability gap and make the current
experiment evidence only for learning and behavioural efficiency.

\section{Limitations}\label{s:limit}

The strongest claims of the framework remain untested. The experiment
uses an environment-provided geometry rather than learning $\Phi$ from
(\ref{eq:indexloss}). Additionally, the experiment requires the retrieval of only one piece of information and therefore does not test whether the proposed method helps with sequentially accessing memories when they become useful.

It has also not yet been explored how well a fixed-dimensional geometry can represent the retrieval sets favored by $\mathcal{L}_\lambda(\Phi)$. If two memories $M_i$ and $M_j$ are quite similar, they may be redundant in many contexts, i.e.\ $s_{ij}(c) < 0$, while at the same time being complementary in many other contexts, i.e.\ $s_{ij}(c) > 0$. Can the optimal $\Phi$ position $z_i$ and $z_j$ such that they are contained in the same $\epsilon$-ball primarily in contexts in which they are complementary, or is such a spatial arrangement impossible?

Additionally, the retrieval rule is left modular rather than canonical. A slight modification to the proposed method is needed so that the movement of the retrieval head and the construction of $\Phi$ arise from the same underlying principle.

Finally, whilst the loss term is given, it is not described how $\Phi$ should be trained in clear detail. Exploring the behavior of $s_{ij}$ with respect to $\Phi$ may result in a class of additional constraints such that the loss can be decomposed into multiple terms that are easier to train. The training procedure itself is also not discussed.

\section{Conclusion}

We propose treating long-context retrieval as persistent traversal through an
address space learned from retrieval utility. Separating memory content from
memory address yields a simple organizing principle. Geometry should make
valuable combinations locally accessible while charging for unnecessary
co-retrieval. The resulting objective naturally distinguishes complementarity
from redundancy and shows that exact substitutes share the same interaction
profile with other memory coalitions.
Our current MiniGrid experiment tests only the traversal mechanism under fixed
geometry. The next empirical tests would be learning the value-indexed space and evaluating repeated low-drift retrieval.

\bibliographystyle{plainnat} 
\bibliography{bibliography}

\appendix
\section{Open access to data and code}\label{s:open_access}
The experimental code is available at
\url{https://anonymous.4open.science/r/Retrieval-as-Traversal-008D}.
A script for reproducing the results reported in this paper is included.
The experiments require less than 3\,GB of VRAM, and a single training run takes about three hours on a consumer-grade desktop computer.

\section{Declaration of LLM Usage}\label{s:llm}
General-purpose LLMs, such as ChatGPT and Claude, were used to assist the development of the experimental code. They were also used as interactive research assistants during the development of the theoretical formulation, including to critique mathematical constructions, find counterexamples and suggest possible refinements. Additionally, they assisted writing and editing. All resulting definitions, arguments, derivations, and claims included in the paper were reviewed and verified by the authors, who take full responsibility for the content.


\end{document}